\documentclass[11pt]{article}

\usepackage{amsmath,amssymb,amsfonts,amsthm}
\usepackage[margin=1in]{geometry}

\usepackage{xcolor}
\usepackage{algorithm}
\usepackage[noend]{algpseudocode}
\usepackage{float}

\DeclareMathOperator*{\argmin}{arg\,min}
\DeclareMathOperator*{\argmax}{arg\,max}

\newtheorem{theorem}{Theorem}
\newtheorem{remark}{Remark}

\newtheorem{definition}{Definition}

\usepackage[authoryear,sort]{natbib}

 \title{Functional Natural Policy Gradients}

\author{Aur\'elien Bibaut \and Houssam Zenati \and Thibaud Rahier \and Nathan Kallus}

\begin{document}

\maketitle

\begin{abstract}
We propose a cross-fitted debiasing device for policy learning from offline data. A key consequence of the resulting learning principle is $\sqrt N$ regret even for policy classes with complexity greater than Donsker, provided a product-of-errors nuisance remainder is $O(N^{-1/2})$. The regret bound factors into a plug-in policy error factor governed by policy-class complexity and an environment nuisance factor governed by the complexity of the environment dynamics, making explicit how one may be traded against the other.
\end{abstract}

\section{Introduction}

Personalized decision policies are increasingly central in areas such as healthcare \citep{bertsimas2017personalized}, education \citep{mandel2014offline}, and public policy \citep{kube2019allocating}, where tailoring actions to individual characteristics can improve outcomes. In many of these settings, however, actively experimenting with new policies to generate ``online data" is expensive, risky, or infeasible, which motivates methods that can evaluate and optimize policies using pre-existing ``offline data."

A variety of work studies semiparametric efficient estimation of the value of a fixed policy from offline data \citep{scharfstein1999adjusting,chernozhukov2018double,dudik2011doubly,jiang2016doubly,kallus2020double,kallus2022efficiently,kallus2022doubly}. And, a variety of work considers selecting the policy that optimizes such estimates over policies in a given class \citep{athey2021policy,zhang2013robust,kallus2021more,zhou2023offline,foster2023orthogonal,chernozhukov2019semiparametric}, which generally yields rates the scale with policy class complexity, e.g., $O_P(N^{-1/2})$ for VC classes. \cite{luedtke2020performance} get regret acceleration to $o_P(N^{-1/2})$ by leveraging an equicontinuity argument. Margin (or, noise) conditions controlling the fraction of hard decision contexts can also speed up policy-search regret \citep{boucheron2005theory,hu2022fast,hu2024contextual,tsybakov2004optimal,tsybakov2005square,koltchinskii2006local}. However, with the exception of \citet{bennett2020efficient} whose assumptions imply the optimal policy solves a smooth conditional moment restriction, none of these works leverage cross-fitting or debiasing to estimate the optimal policy itself, such as done for example when targeting a low-dimensional regular functional \citep{chernozhukov2024applied}.



In this paper we show how to perform a cross-fitted policy debiasing update to an initial ERM policy fit. Specifically, we recycle the machinery of targeted maximum loss estimation  (TMLE) \citep{vanderlaanrose2011targeted,vanderlaan2016one} to find a policy within a pre-specified policy class that maximally increases population-level policy value. To make the analogy clear for readers familiar with TMLE: here, the target functional is the optimal policy value, the nuisance we debias is the policy, and the efficient score equation we solve is an estimated first-order condition for value optimality at the population level.

We demonstrate how to solve for the functional value-optimality condition via optimization along a one-dimensional policy subclass, which we construct in the same way \cite{vanderlaan2016one} construct a universal-least favorable submodel (ULFM) within a nuisance space. It turns out that, in the policy optimization setting, ULFMs are natural policy gradient (NPG) flows \citep{kakade2001natural}. In that sense, our proposal can be seen as a functional NPG ascent, and is similar in spirit to functional optimization (see \citealp{petrulionyte2024functional,elkhoury2025kernel} for example).

The key statistical enablers of our results are two-fold. First, we construct estimated NPG flows / ULFMs on one split of the data and optimize the index along the estimated flow on another split of the data, which allows us to debias policies  living in classes much larger than Donsker. Second, we leverage the TMLE principle: in policy learning, what we are after primarily isn't a good estimate of the optimal value but a good estimate of a policy that realizes it. The TMLE principle allows us to merge these two objectives in one: since a TMLE is a plug-in estimate, a plug-in policy that realizes a good estimate of the optimal value must have good regret. 


Our results of course do not violate existing minimax results on policy learning. Our theorem provides rates in terms of an empirical process term over the one-dimensional NPG flow, which is trivially $O_P(N^{-1/2})$ and a product-of-nuisance-errors remainder term. The latter makes appear an environment-nuisance error factor and a policy error factor. What makes it possible to achieve root-$N$ regret rate over larger-than-Donsker policy classes is that the environment dynamics are learnable. Therefore simple environment dynamics can alleviate the learning burden of complex policy class.

\section{Setup}

We consider the contextual bandit setting. We observe i.i.d. copies of a context-action-reward triplet $(X,A,Y) \in \mathcal{X} \times [K] \times [0,1]$, for some $K \geq 1$, generated from a context density $q_X$, logging policy $\pi_b$, and a conditional density of reward given action and context $q_Y$, where densities are w.r.t. an appropriate dominating measure. Specifically
\begin{align}
X &\sim q_X, \\
A \mid X=x &\sim \pi_b(\cdot \mid x), \\
Y \mid X=x, A=a &\sim q_Y(\cdot \mid a,x).
\end{align}
For any policy $\pi$, generic pair of context and reward densities $q = (q_X, q_Y)$, and $f : \mathcal{X} \times [K] \times [0,1] \to \mathbb{R}$, $x \in \mathcal{X},~ a \in [K]$, define
\begin{align}
P_{q,\pi} f
:=&
\int q_X(x) \sum_a \pi(a \mid x) q_Y(y \mid a,x) f(x,a,y)\,dx\,dy,\\
P
:=& P_{q,\pi_b},\\
Q(a,x) :=& \int y q_Y(y \mid a, x) dy,\\
V(q,\pi)
:=&
\int q_X(x) \sum_a \pi(a \mid x) Q(a,x)\,dx
\\
=& P_{q,\pi}Y,\\
J_\lambda(q,\pi)
=&
V(q,\pi)
-
\lambda \int q_X(x) \sum_a \pi(a \mid x) \log \pi(a \mid x)\,dx.
\end{align}
Let $\Pi$ a class of policies. For any $\pi$, denote $T_\Pi(\pi)$ the tangent space of $\Pi$ at $\pi$, that is 
\begin{align}
T_\Pi(\pi)
:= \left\lbrace \left. \partial_\epsilon \log \pi_\epsilon \right|_{\epsilon = 0}: 
\left\lbrace \pi_\epsilon : \epsilon \right\rbrace \text{ regular one-dimensional submodel of } \Pi, \pi_{\epsilon = 0} = \pi \right\rbrace.
\end{align}

\section{Semiparametric natural gradient and natural gradient flow}

\begin{definition}[Semiparametric natural policy gradient] For any pair of environment densities $q=(q_X, q_Y)$ and policy $\pi \in \Pi$, we define the semiparametric natural policy gradient at $\pi$ under $q$ $G(q,\pi)$ as 
\begin{align} \label{eq:npg}
G(q,\pi)
\in
\argmin_{\phi \in T_\Pi(\pi)}
\left\{
P_{q,\pi}\,\frac{1}{2}\phi^2
-
\partial_\pi J_\lambda(q,\pi)(\pi\phi)
\right\}.
\end{align}
    
\end{definition}

\begin{remark}\label{rem:np_adv_to_sp}
In the nonparametric policy class, that is when $\Pi$ is the full simplex pointwise in $x$, the Riesz representer of the differential
\begin{align}
\phi \mapsto \partial_\pi J_\lambda(q,\pi)(\pi\phi)
\end{align}
under the inner product
\begin{align}
(\phi_1,\phi_2)\mapsto P_{q,\pi}[\phi_1\phi_2]
\end{align}
is the centered entropy-adjusted advantage
\begin{align}
A_{q,\pi}(x,a)
:=
\left\{Q(a,x)-\lambda \log \pi(a\mid x)\right\}
-
\sum_{a'}\pi(a'\mid x)\left\{Q(a',x)-\lambda \log \pi(a'\mid x)\right\}.
\end{align}
Indeed, for any score $\phi$ satisfying $\sum_a \pi(a\mid x)\phi(x,a)=0$,
\begin{align}
\partial_\pi J_\lambda(q,\pi)(\pi\phi)
=
P_{q,\pi}\left[A_{q,\pi}\phi\right].
\end{align}

For a semiparametric policy class $\Pi$, the natural gradient $G(q,\pi)$ is the Riesz representer of the same differential restricted to the tangent space $T_\Pi(\pi)$. Equivalently, $G(q,\pi)$ is the $L^2(P_{q,\pi})$-projection of $A_{q,\pi}$ onto $T_\Pi(\pi)$:
\begin{align}
G(q,\pi)
\in
\argmin_{\phi\in T_\Pi(\pi)}
P_{q,\pi}\left[(\phi-A_{q,\pi})^2\right].
\end{align}
Thus the nonparametric gradient is the centered entropy-adjusted advantage, while under a semiparametric policy class the gradient is its projection onto the tangent space of $\Pi$ at $\pi$.
\end{remark}

\begin{remark}
    In higher level terms, remark \ref{rem:np_adv_to_sp} states that, in the nonparametric setting, the ``direction'' of highest value ascent is the advantage. The right notion of ``directions'' in optimization, and functional optimization, and in particular the space in which gradients live, is the tangent space. In the semiparametric setting, some components of the advantage are orthogonal to the tangent space. The advantage is a gradient in the sense that its inner product against scores gives the the variation of the objective. In semiparametric statistics terms, any such object is a valid gradient. The canonical gradient is the minimum norm such gradient, and it is thus the one that has no orthogonal component to the tangent space. This is why, in our setting the canonical gradient, which we also refer to here as the natural gradient to match optimization terminology, is the projection of the advantage onto the tangent space.
\end{remark}

Note that for any regular one-dimensional subclass $\{\pi_\epsilon : \epsilon\} \subset \Pi$ such that $\pi_{\epsilon=0} = \pi$ and $\partial_\epsilon \log \pi_\epsilon\big|_{\epsilon=0} = \phi$,
we have
\begin{align}
\partial_\epsilon J_\lambda(q,\pi_\epsilon)\big|_{\epsilon=0}
=
P_{q,\pi}\,G(q,\pi)\cdot \phi.
\end{align}
Note that by definition, $G(q,\pi) \in T_\Pi(\pi)$, and therefore they are $\pi$-centered pointwise in $x$, that is,
$\sum_a \pi(a \mid x) G(q,\pi)(a,x)=0$ for any $x \in \mathcal{X}$. The natural policy gradient can be defined as a minimizer of a risk under the data-generating distribution $\pi$ via importance sampling weighting. Specifically, $G(q,\pi)=\widetilde G(P,\pi)$ for
\begin{align}
\widetilde G(P,\pi)
\in
\argmin_{\phi \in T_\Pi(\pi)}
\left\{
P\,\frac{1}{2}\frac{\pi}{\pi_b}\phi^2
-
\partial_\pi \widetilde J_\lambda(P,\pi)(\pi\phi)
\right\}\label{eq:IS_weighted_grad}
\end{align}
with
\begin{align}
\partial_\pi \widetilde J_\lambda(P,\pi)(\pi \phi)
&:=
P\,\frac{\pi}{\pi_b}\,\partial_\pi J_\lambda(q,\pi)(\pi\phi). \label{eq:IS_diff_J}
\end{align}

%

\begin{definition}[Natural policy gradient flow]
For any $\bar P,\bar\pi$, let $t \in [0, \infty) \mapsto \pi_t(\bar P,\bar\pi)$ be defined by the ordinary differential equation
\begin{align}\label{eq:npgf}
\left\{
\begin{aligned}
\frac{d}{dt}\log \pi_t(\bar P,\bar\pi)
&=
\widetilde G\bigl(\bar P,\pi_t(\bar P,\bar\pi)\bigr),\\
\pi_0(\bar P,\bar\pi)
&=
\bar\pi.
\end{aligned}
\right.
\tag{NPGF}
\end{align}
We refer to $t \in [0, \infty) \mapsto \pi_t(\bar P,\bar\pi)$ as the natural policy gradient flow.
\end{definition} 

Our construction mimics the least favorable uniform submodel construction in \cite{vanderlaan2016one}.

\begin{remark}
    The terminology ``natural policy gradient flow'' is literal here. The defining differential equation evolves the policy in the score coordinates $\log \pi_t$ using the Riesz representer of the value differential under the $L^2(P_{q,\pi})$ geometry. In a smooth finite-dimensional parametric model $\Pi=\{\pi_\theta:\theta\in\Theta\}$, this reduces to the usual natural-gradient picture: the score span defines the tangent space, the $L^2(P_{q,\pi_\theta})$ inner product induces the Fisher information, and the resulting update is the Fisher-preconditioned gradient direction. The present construction should therefore be read as a coordinate-free, semiparametric version of natural policy gradient \citep{kakade2001natural}.
\end{remark}

\section{Cross-fitted debiased optimal policy estimator}

We now construct an optimal policy estimator as follows: (1) we compute an initial (entropy regularized) empirical risk minimizer on a first split of the data, (2) we then construct a one-dimensional natural gradient flow starting at the initial estimator and estimating the gradients using a second split of the data, and (3) we select an index of the natural gradient flow that maximizes the (entropy regularized) value on a third split of the data.
\\

Formally, let $P_N^{-1}$, $P_N^0$, and $P_N^1$ be empirical measures generated by three splits of the data each consisting of $N$ i.i.d.\ draws from $P$. Let
\begin{align}
\hat\pi^0 &\in \argmax_{\pi \in \Pi} \widetilde J_\lambda(P_N^{-1},\pi),
\end{align}
be an initial policy estimator computed from the $-1$ split, let 
\begin{align}
t_1
:= \argmax_{t\ge 0}
\widetilde J_\lambda\bigl(P_N^1,\pi_t(P_N^0,\hat\pi^0)\bigr)
\end{align}
be the index of the policy along the gradient flow that optimizes the value estimated on the $1$ split
and let
\begin{align}\label{eq:hatpistar}
\hat\pi_\star := \pi_{t_1}(P_N^0,\hat\pi^0).
\end{align}
be the policy along the gradient flow indexed by the cross-fitted $t_1$. The full procedure is summarized in Algorithm~\ref{alg:debiased_functional_npgf}.

\begin{algorithm}[t!]
\caption{Cross-fitted debiased policy learning via functional natural gradient flow}
\label{alg:debiased_functional_npgf}
\begin{algorithmic}[1]
\Require Three independent empirical splits $P_N^{-1}, P_N^0, P_N^1$
\State Compute an initial regularized ERM policy
\[
\hat\pi^0 \in \argmax_{\pi\in\Pi} \widetilde J_\lambda(P_N^{-1},\pi).
\]
\State Starting from $\hat\pi^0$, construct the natural policy gradient flow
\[
t \mapsto \pi_t(P_N^0,\hat\pi^0)
\]
using the middle split $P_N^0$ to estimate the natural gradient field.
\State Select the index
\[
t_1 \in \argmax_{t\ge 0} \widetilde J_\lambda(P_N^1,\pi_t(P_N^0,\hat\pi^0))
\]
using the third split $P_N^1$.
\State Output the cross-fitted debiased policy
\[
\hat\pi_\star := \pi_{t_1}(P_N^0,\hat\pi^0).
\]
\end{algorithmic}
\end{algorithm}

As in targeted minimum loss estimation, we think of $\hat\pi_\star$ as a debiased policy. Define the oracle (soft-)optimal policy
\begin{equation}
\pi_\star \in \argmax_{\pi \in \Pi} \widetilde J_\lambda(P,\pi).
\end{equation}

\begin{remark}
Algorithmically, the procedure starts from a plug-in ERM policy, then replaces the difficult task of optimizing over the full policy class by a one-dimensional search along an estimated natural-gradient flow. This algorithm should be read as a one-shot offline policy-improvement procedure under fixed logging, not as a repeated online policy-iteration scheme. The role of cross-fitting is twofold: it makes the path construction and the path selection statistically separable, and it is precisely what allows the final regret decomposition to feature a one-dimensional empirical-process term together with product-of-errors nuisance remainders.
\end{remark}

 The following theorem provides conditions under which a regret bound holds.

\begin{theorem}\label{thm:regret_bound_decomp}
Assume $\Pi$ is convex and that $t_1$ is an interior maximizer of
\begin{align}
t \mapsto \widetilde J_\lambda\bigl(P_N^1,\pi_t(P_N^0,\hat\pi^0)\bigr).
\end{align}
and $\hat \pi_\star$ defined as in \eqref{eq:hatpistar}. Then
\begin{align}
J_\lambda(q,\pi_\star)-J_\lambda(q,\hat\pi_\star)
\le I+II+III,
\end{align}
where
\begin{align}
I
&=
(P-P_N^1)\,\frac{\hat\pi_\star}{\pi_b}\,
\widetilde G(P_N^0,\hat\pi_\star)\cdot \left(\frac{\pi_\star}{\hat\pi_\star}-1\right),\\
II
&=
P\,\frac{\hat\pi_\star}{\pi_b}
\bigl(\widetilde G(P,\hat\pi_\star)-\widetilde G(P_N^0,\hat\pi_\star)\bigr)\cdot \left(\frac{\pi_\star}{\hat\pi_\star}-1\right),\\
III &
=
\left\|\widetilde G(P_N^0,\hat\pi_\star)-\widetilde G(P_N^1,\hat\pi_\star)\right\|_{\hat\pi_\star,P_N^1}
\left\|\frac{\pi_\star}{\hat\pi_\star}-1\right\|_{\hat\pi_\star,P_N^1},
\end{align}
where, for any $f : \mathcal{X} \times [K] \times [0,1] \to \mathbb{R}$, policy $\bar \pi$ and distribution $\bar P$ with domain $\mathcal{X} \times [K] \times [0,1]$, $\| f \|_{\bar \pi, \bar P} := (\bar P \{(\bar \pi / \pi_b) f^2 \})^{1/2}$.
\end{theorem}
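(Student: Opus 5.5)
The plan follows the TMLE template: a first-order expansion of the objective at the plug-in policy, combined with an estimated ``efficient score equation'' that holds at $\hat\pi_\star$. The three terms $I$, $II$, $III$ then come from adding and subtracting the population, fold-$0$ and fold-$1$ gradients.

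\emph{Step 1 (first-order expansion via concavity).} The map $\pi \mapsto J_\lambda(q,\pi)$ is concave on the convex class $\Pi$: it is linear in $\pi$ plus $\lambda$ times the concave entropy. Hence
\begin{align*}
J_\lambda(q,\pi_\star)-J_\lambda(q,\hat\pi_\star)\le \partial_\pi J_\lambda(q,\hat\pi_\star)(\pi_\star-\hat\pi_\star).
\end{align*}
Convexity of $\Pi$ makes $\pi_\epsilon=\hat\pi_\star+\epsilon(\pi_\star-\hat\pi_\star)$ a regular submodel of $\Pi$ through $\hat\pi_\star$, with score $\phi=\pi_\star/\hat\pi_\star-1\in T_\Pi(\hat\pi_\star)$. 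Writing $\pi_\star-\hat\pi_\star=\hat\pi_\star\phi$, the Riesz/projection characterization of the natural gradient (Remark~\ref{rem:np_adv_to_sp}) and the importance-sampled form \eqref{eq:IS_weighted_grad}--\eqref{eq:IS_diff_J} give
\begin{align*}
\partial_\pi J_\lambda(q,\hat\pi_\star)(\hat\pi_\star\phi)=P\,\frac{\hat\pi_\star}{\pi_b}\,\widetilde G(P,\hat\pi_\star)\,\phi .
\end{align*}
Here I will use that $\phi$ lies in the tangent space, so the projection loses nothing. I will also use that $\widetilde G(P,\cdot)=G(q,\cdot)$.

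\emph{Step 2 (estimated score equation from the interior maximizer).} Differentiate $t\mapsto \widetilde J_\lambda(P_N^1,\pi_t(P_N^0,\hat\pi^0))$ at the interior maximizer $t_1$. By \eqref{eq:npgf} the score of the flow at $t_1$ is $\widetilde G(P_N^0,\hat\pi_\star)$. The first-order condition, together with the definition of $\widetilde G(P_N^1,\cdot)$ as the empirical Riesz representer under $P_N^1$, reads
\begin{align*}
P_N^1\,\frac{\hat\pi_\star}{\pi_b}\,\widetilde G(P_N^1,\hat\pi_\star)\,\widetilde G(P_N^0,\hat\pi_\star)=0 .
\end{align*}
This is the analogue of TMLE solving its efficient influence curve equation. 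The subtle point is that it pins down $\widetilde G(P_N^1,\cdot)$ only along the direction $\widetilde G(P_N^0,\cdot)$, not along $\phi$.

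\emph{Step 3 (decomposition).} Write $g=\widetilde G(P,\hat\pi_\star)$, $g_0=\widetilde G(P_N^0,\hat\pi_\star)$, $g_1=\widetilde G(P_N^1,\hat\pi_\star)$ and $w=\hat\pi_\star/\pi_b$. Then
\begin{align*}
Pwg\phi=Pw(g-g_0)\phi+(P-P_N^1)wg_0\phi+P_N^1 w g_0\phi .
\end{align*}
The first two terms are exactly $II$ and $I$. For the last term, use the empirical Riesz property of $g_1$ under $P_N^1$ applied to the direction $\phi$, which gives $P_N^1 w g_1\phi=\widehat{\partial J}(\phi)$. I expect to argue that $\widehat{\partial J}(\phi)\le 0$ at the selected index, or else to absorb it via Step 2, leaving
\begin{align*}
P_N^1 w g_0\phi\le P_N^1 w(g_0-g_1)\phi .
\end{align*}
Cauchy--Schwarz in the weighted norm $\|\cdot\|_{\hat\pi_\star,P_N^1}$ then gives $III$.

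\emph{Main obstacle.} The delicate step is controlling $P_N^1 w g_0\phi$. Step 2 only kills $P_N^1 w g_1 g_0$, whereas I need a statement about $\phi$. I would handle this by choosing the exact form of the empirical objective's differential, so that the first-order condition combined with the projection property of $g_0$ onto the tangent space reduces the term to the cross product $(g_0-g_1)\phi$. If that reduction does not go through, a sign argument from concavity of the empirical objective will be needed, and this is where the convexity assumption on $\Pi$ must be used a second time.
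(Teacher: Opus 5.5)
Your Steps 1--3 match the paper's argument. The concavity bound is the paper's second-order Taylor expansion with positive entropy curvature, and your decomposition $Pwg\phi = II + I + P_N^1 w g_0\phi$ is the same. The gap is in how you handle the last term.

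The inequality you aim for, $P_N^1 w g_0\phi \le P_N^1 w(g_0-g_1)\phi$, is equivalent to $P_N^1 w g_1\phi \le 0$. That says the empirical directional derivative of $\widetilde J_\lambda(P_N^1,\cdot)$ at $\hat\pi_\star$ toward $\pi_\star$ is nonpositive. It would hold if $\hat\pi_\star$ maximized $\widetilde J_\lambda(P_N^1,\cdot)$ over all of $\Pi$. But $\hat\pi_\star$ is only stationary along the one-dimensional flow, so this sign has no justification. Your fallback does not rescue it. Concavity of the empirical objective gives
\[
\widetilde J_\lambda(P_N^1,\pi_\star)-\widetilde J_\lambda(P_N^1,\hat\pi_\star)\le P_N^1 w g_1\phi,
\]
which is a lower bound on $P_N^1 w g_1\phi$, the wrong direction. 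Convexity of $\Pi$ is not needed a second time.

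The missing idea is to use the stationarity condition on norms rather than on inner products with $\phi$. First apply Cauchy--Schwarz directly:
\[
|P_N^1 w g_0\phi| \le \|g_0\|_{\hat\pi_\star,P_N^1}\,\|\phi\|_{\hat\pi_\star,P_N^1}.
\]
Then use the orthogonality $\langle g_0,g_1\rangle_{\hat\pi_\star,P_N^1}=0$ from your Step 2, via Pythagoras:
\[
\|g_0-g_1\|^2_{\hat\pi_\star,P_N^1}=\|g_0\|^2_{\hat\pi_\star,P_N^1}+\|g_1\|^2_{\hat\pi_\star,P_N^1}\ge \|g_0\|^2_{\hat\pi_\star,P_N^1}.
\]
Together these give $|P_N^1 w g_0\phi|\le III$. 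This needs no information about $g_1$ in the direction $\phi$, which is exactly what Step 2 cannot supply. This is how the paper closes the argument.
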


\begin{remark}\label{rem:termI_discussion}
    Conditional on the $-1$ and $0$ splits, $I$ is an empirical process term over the one-dimensional gradient flow which is substantially simpler than a global empirical-process analysis over the full policy class. It is trivially $O_P(N^{-1/2})$ under a compactness condition that holds essentially for free.
\end{remark}

\begin{remark}\label{rem:termII_discusion}
    Terms $II$ and $III$ are error-product terms where the first factor in each is an environment-nuisance error factor (measuring how well the environment-dependent quantities are learned) and the second is a policy error factor (measuring how far the selected policy is from the oracle policy along the policy side). It may help to understand why the first factors are pure environment-nuisance errors to notice that under a locally fully nonparametric $\Pi$, they reduce to differences in estimated advantage functions.
\end{remark}

\begin{remark}
    Remarks \ref{rem:termI_discussion} and \ref{rem:termII_discusion} explain the core statistical implications of the decomposition of theorem \ref{thm:regret_bound_decomp}. Rather than paying policy-class complexity through a global stochastic process over $\Pi$, we pay it only through its interaction with environment estimation error, which is what makes root-$N$ regret possible beyond the classical Donsker regime.
\end{remark}

\begin{remark}
   The theorem controls soft regret in terms of the entropy-regularized value $J_\lambda$, not directly hard regret in terms of the value $V$. The two differ only through the bias induced by entropy regularization: in the finite-action case, hard regret is soft regret plus at most an $O(\lambda)$ term, more precisely $\lambda \log K$. Hence $\lambda$ trades statistical/optimization stability against distance to the unregularized optimum.
\end{remark}

\begin{remark}
    The reason why we optimize the entropy penalized value and not directly the  unpenalized value along the gradient flow is to ensure that the optimum is realized at an interior point. The interior stationarity condition ensures that the debiased policy satisfies a certain zero-gradient equation. In semiparametric statistics terms, this annulation of the gradient is referred to as satisfying the efficient influence function (EIF) equation. Finding a nuisance that satisfies the EIF equation is what targeted learning does.
\end{remark}

\begin{proof}
The proof combines the first-order condition for the cross fitted index $t_1$ with a von-Mises expansion in terms of the natural gradient.

\paragraph{Stationarity at an interior point of the gradient flow.} 
From the definitions \eqref{eq:IS_diff_J} and \eqref{eq:IS_weighted_grad} of the IS-weighted differential $\tilde J_\lambda$ and of $\tilde G$ the gradient in terms of the $P$-risk, stationarity at $G(P_N^1,\pi_t(P_N^0,\hat\pi^0))$ and the chain rule,
\begin{equation}
\frac{d}{dt}\widetilde J_\lambda(P_N^1,\pi_t(P_N^0,\hat\pi^0))
=
P_N^1 \frac{\pi_t(P_N^0,\hat\pi^0)}{\pi_b}
\widetilde G(P_N^1,\pi_t(P_N^0,\hat\pi^0))
\cdot
\frac{d}{dt}\log \pi_t(P_N^0,\hat\pi^0).
\end{equation}
Then from the definition of the gradient flow, we have
\[
\frac{d}{dt}\log \pi_t(P_N^0,\hat\pi^0)
=
\widetilde G(P_N^0,\pi_t(P_N^0,\hat\pi^0)),
\]
and therefore, from stationarity at $t_1$, it holds that
\begin{align}
0&=
P_N^1 \frac{\hat\pi_\star}{\pi_b}
\widetilde G(P_N^1,\hat\pi_\star)\cdot \widetilde G(P_N^0,\hat\pi_\star) \\
&=\left\langle \widetilde G(P_N^1,\hat\pi_\star),\widetilde G(P_N^0,\hat\pi_\star)\right\rangle_{\hat\pi_\star,P_N^1}\label{eq:stationary_condition},
\end{align}
where, for any distribution $\bar P$ with domain $\mathcal{X} \times [K] \times [0,1]$, policy $\bar \pi$ and $f_1, f_2 : \mathcal{X} \times [K] \times [0,1] \to \mathbb{R}$, we define the $\langle \cdot, \cdot \rangle_{\bar P, \bar \pi}$ inner product by
\begin{align}
    \langle f_1 , f_2 \rangle_{\bar P, \bar \pi} := \bar P \frac{\bar \pi}{\pi_b} f_1 f_2.
\end{align}
\paragraph{Submodel definition.} We introduce 
\begin{align}
\phi_\star &:= \frac{\pi_\star}{\hat\pi_\star} - 1,
\end{align}
which is a score, and we construct a one-dimensional policy class $\{ \pi_\epsilon : \epsilon \in [0,1] \}$ with score $\phi^\star$ at every $\epsilon$ by letting for every $\epsilon \in [0,1]$ 
$\pi_\epsilon := \hat\pi_\star(1+\epsilon\phi_\star).$ Convexity of $\Pi$ ensures that $\{ \pi_\epsilon : \epsilon \in [0,1] \}\subset \Pi$.
Then, a second order Taylor expansion provides the existence of $\widetilde\epsilon \in [0,1]$ such that
\begin{align}
J_\lambda(q,\pi_\star)-J_\lambda(q,\hat\pi_\star)
&=
\partial_\epsilon J_\lambda(q,\pi_\epsilon)\big|_{\epsilon=0}
+
\frac{1}{2}\,\partial_{\epsilon\epsilon}J_\lambda(q,\pi_\epsilon)\big|_{\epsilon=\widetilde\epsilon}\\
&=
P\,\frac{\hat\pi_\star}{\pi_b}\,\widetilde G(P,\hat\pi_\star)\cdot \phi_\star
-
\frac{\lambda}{2}\,\partial_{\epsilon\epsilon}P\,H(\pi_\epsilon)(X)\big|_{\epsilon=\widetilde\epsilon},
\end{align}
by definition of the gradient at $\hat \pi_\star$, because the value term $V(q, \pi_\epsilon)$ is affine in $\epsilon$ and with
\begin{align}
H(\pi)(x) := \sum_a \pi(a \mid x)\log \pi(a \mid x).
\end{align}

\paragraph{Second order derivative of the entropy along the path.} We have that
\begin{align}
H(\pi_\epsilon)(x)
&=
\sum_a \hat\pi_\star(a \mid x)(1+\epsilon\phi_\star(a \mid x))
\log\!\left(\hat\pi_\star(a \mid x)(1+\epsilon\phi_\star(a \mid x))\right)\\
&= Q(\epsilon)+S(\epsilon),
\end{align}
with
\begin{align}
Q(\epsilon)
&:=
\sum_a \hat\pi_\star(a \mid x)(1+\epsilon\phi_\star(a \mid x))
\log \hat\pi_\star(a \mid x),\\
S(\epsilon)
&:=
\sum_a \hat\pi_\star(a \mid x)(1+\epsilon\phi_\star(a \mid x))
\log(1+\epsilon\phi_\star(a \mid x)).
\end{align}
We have
\begin{align}
S'(\epsilon)
&=
\sum_a \hat\pi_\star(a \mid x)\phi_\star(a \mid x)
\log(1+\epsilon\phi_\star(a \mid x))
+
\sum_a \hat\pi_\star(a \mid x)\phi_\star(a \mid x),
\end{align}
and $Q''(\epsilon)=0$.
We have
\begin{align}
S''(\epsilon)
&=
\sum_a \hat\pi_\star(a \mid x)
\frac{\phi_\star^2(a \mid x)}{1+\epsilon\phi_\star(a \mid x)}\\
&=
\sum_a \hat\pi_\star(a \mid x)
\frac{\hat\pi_\star(a \mid x)}{\pi_\epsilon(a \mid x)}\phi_\star^2(a \mid x)
>0.
\end{align}
Hence
\begin{align}
\partial_{\epsilon\epsilon}P\,H(\pi_\epsilon)(X)\big|_{\epsilon=\widetilde\epsilon}>0,
\end{align}

\paragraph{Von Mises Expansion.}
The second order derivative above gives us that
\begin{align}
J_\lambda(q,\pi_\star)-J_\lambda(q,\hat\pi_\star)
\le
P\,\frac{\hat\pi_\star}{\pi_b}\,\widetilde G(P,\hat\pi_\star)\cdot \phi_\star
= I+II+III',
\end{align}
with
\begin{align}
I
&:=
(P-P_N^1)\,\frac{\hat\pi_\star}{\pi_b}\,
\widetilde G(P_N^0,\hat\pi_\star)\cdot \phi_\star,\\
II
&:=
P\,\frac{\hat\pi_\star}{\pi_b}
\bigl(\widetilde G(P,\hat\pi_\star)-\widetilde G(P_N^0,\hat\pi_\star)\bigr)\cdot \phi_\star,\\
III'
&:=
P_N^1\,\frac{\hat\pi_\star}{\pi_b}\,
\widetilde G(P_N^0,\hat\pi_\star)\cdot \phi_\star.
\end{align}
Therefore, from Pythagoras and the orthogonality identity \eqref{eq:stationary_condition} arising from interior stationarity at $t_1$,
\begin{align}
\left\|\widetilde G(P_N^0,\hat\pi_\star)\right\|_{\hat\pi_\star,P_N^1}
\le
\left\|\widetilde G(P_N^0,\hat\pi_\star)-\widetilde G(P_N^1,\hat\pi_\star)\right\|_{\hat\pi_\star,P_N^1},
\end{align}
and then from Cauchy--Schwarz
\begin{align}
\vert III' \vert
\le
\left\|\widetilde G(P_N^0,\hat\pi_\star)-\widetilde G(P_N^1,\hat\pi_\star)\right\|_{\hat\pi_\star,P_N^1}
\left\|\phi_\star\right\|_{\hat\pi_\star,P_N^1}.
\end{align}
\end{proof}

\bibliographystyle{plainnat}   
\bibliography{refs}

@article{athey2021policy,
  author  = {Athey, Susan and Wager, Stefan},
  title   = {Policy Learning With Observational Data},
  journal = {Econometrica},
  volume  = {89},
  number  = {1},
  pages   = {133--161},
  year    = {2021},
  doi     = {10.3982/ECTA15732}
}

@inproceedings{mandel2014offline,
  title     = {Offline policy evaluation across representations with applications to educational games},
  author    = {Mandel, Travis and Liu, Yun-En and Levine, Sergey and Brunskill, Emma and Popovic, Zoran},
  booktitle = {AAMAS},
  volume    = {1077},
  year      = {2014}
}

@inproceedings{kube2019allocating,
  title     = {Allocating interventions based on predicted outcomes: A case study on homelessness services},
  author    = {Kube, Amanda and Das, Sanmay and Fowler, Patrick J.},
  booktitle = {Proceedings of the AAAI Conference on Artificial Intelligence},
  volume    = {33},
  number    = {01},
  pages     = {622--629},
  year      = {2019}
}

@article{bertsimas2017personalized,
  title     = {Personalized diabetes management using electronic medical records},
  author    = {Bertsimas, Dimitris and Kallus, Nathan and Weinstein, Alexander M. and Zhuo, Ying Daisy},
  journal   = {Diabetes Care},
  volume    = {40},
  number    = {2},
  pages     = {210--217},
  year      = {2017},
  publisher = {American Diabetes Association}
}

@article{luedtke2020performance,
  author  = {Luedtke, Alex and Chambaz, Antoine},
  title   = {Performance Guarantees for Policy Learning},
  journal = {Annales de l'Institut Henri Poincar{\'e}, Probabilit{\'e}s et Statistiques},
  volume  = {56},
  number  = {3},
  pages   = {2162--2188},
  year    = {2020},
  doi     = {10.1214/19-AIHP1034}
}

@article{chernozhukov2018double,
  author  = {Chernozhukov, Victor and Chetverikov, Denis and Demirer, Mert and Duflo, Esther and Hansen, Christian and Newey, Whitney and Robins, James},
  title   = {Double/Debiased Machine Learning for Treatment and Structural Parameters},
  journal = {The Econometrics Journal},
  volume  = {21},
  number  = {1},
  pages   = {C1--C68},
  year    = {2018},
  doi     = {10.1111/ectj.12097}
}

@book{vanderlaanrose2011targeted,
  author    = {van der Laan, Mark J. and Rose, Sherri},
  title     = {Targeted Learning: Causal Inference for Observational and Experimental Data},
  publisher = {Springer},
  year      = {2011}
}

@article{vanderlaan2016one,
  author    = {van der Laan, Mark and Gruber, Susan},
  title     = {One-Step Targeted Minimum Loss-based Estimation Based on Universal Least Favorable One-Dimensional Submodels},
  journal   = {The International Journal of Biostatistics},
  volume    = {12},
  number    = {1},
  pages     = {351--378},
  year      = {2016},
  doi       = {10.1515/ijb-2015-0054}
}

@inproceedings{kakade2001natural,
  author    = {Kakade, Sham M.},
  title     = {A Natural Policy Gradient},
  booktitle = {Advances in Neural Information Processing Systems 14},
  year      = {2001}
}

@inproceedings{chernozhukov2019semiparametric,
  author    = {Chernozhukov, Victor and Demirer, Mert and Lewis, Greg and Syrgkanis, Vasilis},
  title     = {Semi-Parametric Efficient Policy Learning with Continuous Actions},
  booktitle = {Advances in Neural Information Processing Systems 32},
  year      = {2019}
}

@article{petrulionyte2024functional,
  title   = {Functional bilevel optimization for machine learning},
  author  = {Petrulionyte, Ieva and Mairal, Julien and Arbel, Michael},
  journal = {Advances in Neural Information Processing Systems},
  volume  = {37},
  pages   = {14016--14065},
  year    = {2024}
}

@article{elkhoury2025kernel,
  author  = {El Khoury, Fares and Pauwels, Edouard and Vaiter, Samuel and Arbel, Michael},
  title   = {Learning Theory for Kernel Bilevel Optimization},
  journal = {arXiv preprint arXiv:2502.08457},
  year    = {2025}
}

@inproceedings{dudik2011doubly,
  title     = {Doubly Robust Policy Evaluation and Learning},
  author    = {Dud{\'\i}k, Miroslav and Langford, John and Li, Lihong},
  booktitle = {Proceedings of the 28th International Conference on Machine Learning},
  year      = {2011}
}

@article{boucheron2005theory,
  title     = {Theory of classification: A survey of some recent advances},
  author    = {Boucheron, St{\'e}phane and Bousquet, Olivier and Lugosi, G{\'a}bor},
  journal   = {ESAIM: Probability and Statistics},
  volume    = {9},
  pages     = {323--375},
  year      = {2005},
  publisher = {EDP Sciences}
}

@article{hu2022fast,
  title     = {Fast rates for contextual linear optimization},
  author    = {Hu, Yichun and Kallus, Nathan and Mao, Xiaojie},
  journal   = {Management Science},
  volume    = {68},
  number    = {6},
  pages     = {4236--4245},
  year      = {2022},
  publisher = {INFORMS}
}

@article{hu2024contextual,
  title   = {Contextual Linear Optimization with Partial Feedback},
  author  = {Hu, Yichun and Kallus, Nathan and Mao, Xiaojie and Wu, Yanchen},
  journal = {arXiv preprint arXiv:2405.16564},
  year    = {2024}
}

@article{tsybakov2004optimal,
  title     = {Optimal aggregation of classifiers in statistical learning},
  author    = {Tsybakov, Alexander B.},
  journal   = {The Annals of Statistics},
  volume    = {32},
  number    = {1},
  pages     = {135--166},
  year      = {2004},
  publisher = {Institute of Mathematical Statistics}
}

@article{tsybakov2005square,
  title  = {Square root penalty: adaptation to the margin in classification and in edge estimation},
  author = {Tsybakov, Alexandre B. and van de Geer, Sara A.},
  year   = {2005}
}

@article{koltchinskii2006local,
  title  = {Local Rademacher Complexities and Oracle Inequalities in Risk Minimization},
  author = {Koltchinskii, Vladimir},
  year   = {2006}
}

@article{kallus2022efficiently,
  title     = {Efficiently breaking the curse of horizon in off-policy evaluation with double reinforcement learning},
  author    = {Kallus, Nathan and Uehara, Masatoshi},
  journal   = {Operations Research},
  volume    = {70},
  number    = {6},
  pages     = {3282--3302},
  year      = {2022},
  publisher = {INFORMS}
}

@article{kallus2020double,
  title   = {Double reinforcement learning for efficient off-policy evaluation in Markov decision processes},
  author  = {Kallus, Nathan and Uehara, Masatoshi},
  journal = {Journal of Machine Learning Research},
  volume  = {21},
  number  = {167},
  pages   = {1--63},
  year    = {2020}
}

@article{zhang2013robust,
  title   = {Robust estimation of optimal dynamic treatment regimes for sequential treatment decisions},
  author  = {Zhang, Baqun and Tsiatis, Anastasios A. and Laber, Eric B. and Davidian, Marie},
  journal = {Biometrika},
  volume  = {100},
  number  = {3},
  year    = {2013}
}

@inproceedings{jiang2016doubly,
  title     = {Doubly robust off-policy value evaluation for reinforcement learning},
  author    = {Jiang, Nan and Li, Lihong},
  booktitle = {International Conference on Machine Learning},
  pages     = {652--661},
  year      = {2016},
  organization = {PMLR}
}

@article{scharfstein1999adjusting,
  title     = {Adjusting for nonignorable drop-out using semiparametric nonresponse models},
  author    = {Scharfstein, Daniel O. and Rotnitzky, Andrea and Robins, James M.},
  journal   = {Journal of the American Statistical Association},
  volume    = {94},
  number    = {448},
  pages     = {1096--1120},
  year      = {1999},
  publisher = {Taylor \& Francis}
}

@article{kallus2021more,
  title     = {More efficient policy learning via optimal retargeting},
  author    = {Kallus, Nathan},
  journal   = {Journal of the American Statistical Association},
  volume    = {116},
  number    = {534},
  pages     = {646--658},
  year      = {2021},
  publisher = {Taylor \& Francis}
}

@article{foster2023orthogonal,
  title     = {Orthogonal statistical learning},
  author    = {Foster, Dylan J. and Syrgkanis, Vasilis},
  journal   = {The Annals of Statistics},
  volume    = {51},
  number    = {3},
  pages     = {879--908},
  year      = {2023},
  publisher = {Institute of Mathematical Statistics}
}

@article{zhou2023offline,
  title     = {Offline multi-action policy learning: Generalization and optimization},
  author    = {Zhou, Zhengyuan and Athey, Susan and Wager, Stefan},
  journal   = {Operations Research},
  volume    = {71},
  number    = {1},
  pages     = {148--183},
  year      = {2023},
  publisher = {INFORMS}
}

@inproceedings{kallus2022doubly,
  title     = {Doubly robust distributionally robust off-policy evaluation and learning},
  author    = {Kallus, Nathan and Mao, Xiaojie and Wang, Kaiwen and Zhou, Zhengyuan},
  booktitle = {International Conference on Machine Learning},
  pages     = {10598--10632},
  year      = {2022},
  organization = {PMLR}
}

@inproceedings{bennett2020efficient,
  title     = {Efficient policy learning from surrogate-loss classification reductions},
  author    = {Bennett, Andrew and Kallus, Nathan},
  booktitle = {International Conference on Machine Learning},
  pages     = {788--798},
  year      = {2020},
  organization = {PMLR}
}

@article{chernozhukov2024applied,
  title   = {Applied causal inference powered by ML and AI},
  author  = {Chernozhukov, Victor and Hansen, Christian and Kallus, Nathan and Spindler, Martin and Syrgkanis, Vasilis},
  journal = {arXiv preprint arXiv:2403.02467},
  year    = {2024}
}

\end{document}